\theoremstyle{plain}
\newtheorem{theorem}{Theorem}[section]
\newtheorem{lemma}[theorem]{Lemma}
\theoremstyle{definition}
\newtheorem{definition}[theorem]{Definition}
\title{Multi-Dimensional Hyena for Spatial Inductive Bias}
\author {
    Itamar Zimerman,
    Lior Wolf
}
\begin{document}
\maketitle

\begin{abstract}
In recent years, Vision Transformers have attracted increasing interest from computer vision researchers. However, the advantage of these transformers over CNNs is only fully manifested when trained over a large dataset, mainly due to the reduced inductive bias towards spatial locality within the transformer's self-attention mechanism. In this work, we present a data-efficient vision transformer that does not rely on self-attention. Instead, it employs a novel generalization to multiple axes of the very recent Hyena layer. We propose several alternative approaches for obtaining this generalization and delve into their unique distinctions and considerations from both empirical and theoretical perspectives.

Our empirical findings indicate that the proposed Hyena N-D layer boosts the performance of various Vision Transformer architectures, such as ViT, Swin, and DeiT across multiple datasets. Furthermore, in the small dataset regime, our Hyena-based ViT is favorable to ViT variants from the recent literature that are specifically designed for solving the same challenge, i.e., working with small datasets or incorporating image-specific inductive bias into the self-attention mechanism. Finally, we show that a hybrid approach that is based on Hyena N-D for the first layers in ViT, followed by layers that incorporate conventional attention, consistently boosts the performance of various vision transformer architectures.
\end{abstract}

\section{Introduction}
Creating a versatile layer designed to effectively process N-dimensional data within deep networks is an important research direction, which has significant implications for key application domains, such as computer vision and speech processing. It is imperative that such a layer not only exhibit strong inductive bias towards N-dimensional data, but also retain the required capacity to exploit extensive datasets. Currently, two primary types of layers dominate N-dimensional data domains: transformers~\cite{vaswani2017attention} and CNNs~\cite{he2016deep,liu2022convnet}. 

Standard CNNs employ relatively small filters~\cite{he2016deep,lecun1989backpropagation,lecun1998gradient}, entailing a high inductive bias, particularly for N-D locality. However, they are less efficient and effective at handling long contexts. Conversely, transformers exhibit a lower inductive bias~\cite{ma2022mega}, but when trained on enough data, they appear to handle N-D data effectively, by processing it as a 1-D sequence with corresponding positional encoding~\cite{dosovitskiy2020image,arnab2021vivit,liu2022video}. 

One advantage transformers hold over CNNs is their ability to deal with varying data lengths and provide a global context at the layer level~\cite{vaswani2017attention}. Yet, their quadratic complexity in sequence length presents obstacles to processing long contexts, which are vital for many tasks. 

This work aims to combine the relative strengths of both CNN and transformers by developing a novel layer that possesses: (i) an inductive bias towards N-dimensional data, (ii) sufficient expressiveness, (iii) a sub-quadratic dependency on sequence length, and (iv) flexibility in processing N-dimensional data of any N-D lengths, while maintaining global context at the layer level. 

As a foundation for this new layer we employ  multi-axes long convolutions, a recent family of layers proven effective for N-dimensional data~\cite{nguyen2022s4nd,baron20232}. These layers employ the convolution of the signal with a multi-axes implicit filter. Unlike prior layers in this field, our implicit filters are not anchored in linear recurrence (similarly to state-space layers~\cite{gu2021combining,s4}). Instead, we extend the very recent Hyena layer~\cite{poli2023hyena} to accommodate N-D data. As our theoretical analysis reveals, this results in a simpler, more expressive, and more efficient model.

Our main contribution is the Hyena N-D layer, which generalizes the recent Hyena layer to multi-dimensional data. We justify our design choices extensively, by empirically and theoretically considering several parametrizations, several decaying structures for incorporating bias of two-dimensional locality, and the first multi-directional variant of Hyena, which has negligible additional computation. Moreover,we are the first to theoretically characterize a form of inductive bias inherent in the family of the Hyena layer. 

As a direct application, we demonstrate that our layer can be used as a drop-in replacement within the ViT backbone to derive a much more data- and memory-efficient model. We also propose a hybrid model that combines attention and Hyena 2-D layers in ViT, further improving performance.

\section{Background and Notations}
\subsubsection{Implicit Global Convolution Layers}
Standard convolution layers are a fundamental building block of deep learning~\citep{cnn1,cnn2,cnn3}. These layers parameterized a convolution filter of size L and C channels with L*C parameters, where each element is defined explicitly. In contrast, an emerging approach implicitly defined the convolution kernel via a learnable function. Namely, the kernel $k_{i}^h$ (filter) at position $i$ and channel $h$ is defined by a function $f^h$ such that $f^h(i) = k_i$. These methods have three main advantages: (i) These layers can operate over an unrestricted context, as opposed to fixed-size explicit filters. (ii) The layers have sub-quadratic time dependency on sequence length, and (iii) As the number of parameters is decoupled from the sequence length, these kernels are regularized by design, which appears to be necessary for their effectiveness~\cite{regu1,regu2}. S4 ~\cite{s4} and state-space layers~\cite{gu2021combining} were the pioneers to show the effectiveness of this approach, by parameterizing convolution kernels via the linear state-space model (SSM), which was then simplified using diagonal and real SSMs~\cite{gupta2022diagonal,gupta2022simplifying}. Similar approaches by~\citet{ma2022mega,lutati2023focus}, use learnable components, including EMA and IIR filters, instead of SSMs to formulate the parameterization. As an alternative, Hyena and CkConv~\cite{romero2021ckconv} established the parameterization by applying standard Feedforward neural network (FFN) layers that operate on positional encoding. These approaches provide superior performance in several areas, such as NLP~\cite{mehta2022long,wang2022pretraining,dao2022hungry}, speech~\cite{saon2023diagonal}, RL~\cite{lu2023structured,david2022decision}, time series analysis, and more, especially in tasks that require capturing long-range dependencies.

\noindent{\bf Implicit N-D global convolution\quad} Recently, this approach extended into multi-dimensional data, using implicit parametrization for N-dimensional filters, which is shown to be an effective method for computer vision tasks~\cite{nguyen2022s4nd,baron20232}. The S4ND~\cite{nguyen2022s4nd} was the first to present the effectiveness of such an approach. It parameterized N-D filters by composing independent SSM-based filters per axis, and during the forward path the filters were aggregated to create a N-D global filter, by taking the outer product of the per-axis filters. This approach was very efficient. However, in~\cite{baron20232} it was shown that the approach of learning kernels separately per axis can be limited in terms of expressiveness, which makes it advisable to leverage it with more expressive mechanisms. In this light, our layer is the first to construct N-D implicit filters without relying on the SSM system.

\noindent{\bf Hyena\quad} The Hyena layer parameterized implicit scalar filters of size $L$ per channel $c \in [C]$ by $ H^c := {h^c}_1 , \cdot , {h^c}_L $ by employing FFN $FFN^c : R^d \rightarrow R$ on positional embedding $pe(l) \in R^d$ such that $\forall l \in [L] : {h^c}_l = FFN^c(\text{PE}(l)) $. This mechanism can generate kernels of any size, enabling the layer to process unrestricted context. Inspired by attention, Hyena implements an expressive data-controlled linear operator, which relies on interleaving implicit long convolutions with element-wise multiplication. In terms of performance, the Hyena layer introduces exceptional performance, reaching transformer quality with a more efficient model, and with sub-quadratic complexity in sequence length.

To add inductive bias towards 1-dimensional locality, the Hyena layer multiplies the filters derived from the FFN with a window function. This function is expressed as follows:
\begin{equation}
\label{eq:hyena1dwindow}
    \text{window}(t) = \text{exp}(- \alpha t) + \gamma 
\end{equation}

\noindent{\bf Vision Transformers\quad} The Vision Transformer (ViT)~\cite{dosovitskiy2020image} is an attention-based model architecture for computer vision tasks. In contrast to conventional Convolutional Neural Networks (CNNs) that utilize local correlations via convolutional filters, the ViT reshapes an image into a 1-D sequence of fixed-size patches, which are processed by a stack of transformer encoder layers. Since transformers are permutation invariant, positional encoding is incorporated. The self-attention mechanism within the transformer enables each patch to be considered in relation to all others, thereby facilitating the learning of both local and long-range dependencies within the image. The output from the transformer is a sequence of embedded patches, with a specific classification token utilized for classification tasks, similar to BERT~\cite{devlin2018bert}. In ViT, the architecture doesn't impose any explicit spatial locality bias, which results in a flexible model that - given a sufficient amount of training data - can capture complex dependencies across the image.

Over the years, the ViT model has seen numerous enhancements. For instance, DeiT~\cite{touvron2021training} optimizes performance through data augmentation, token-based distillation, and regularization, thereby achieving strong benchmark results even with less data. The Swin Transformer~\cite{liu2021swin} introduces a model with more spatial inductive bias, which adapts a hierarchical structure by partitioning images into non-overlapping windows and then processing them hierarchically. Moreover, \cite{dai2021coatnet,guo2022cmt,d2021convit} amplify ViT's efficiency by integrating convolutional layers into the ViT architecture. Furthermore, approaches such as \cite{xie2021segformer,carion2020end,chen2022conditional} introduced specific modifications to ViT, enabling it to excel in detection and semantic segmentation tasks.

\noindent{\bf Notation\quad}
Our notation follows the Hyena literature as closely as possible \cite{poli2023hyena,nguyen2023hyenadna}. 
Specifically, we denote the number of channels by $C$, and the filter on channel $c \in [C]$ by $H^c$. We denote the number of dimensions by $N$, and the sequence length at any dimension by $L_n$ for $n \in [N]$, $L := \Pi_{n=1}^N L_n$ as the total sequence length, $L_{max} := \max_{n=1}^N L_n$ as the maximal sequence length, and $\hat{N}$ as the depth of the Hyena recurrence.

For the notations of the Hyena architecture, we denote the FFN network by $\text{FFN} : \mathbb{R}^M \rightarrow \mathbb{R}^{(\hat{N}+1)C} $, the positional encoding function by $\text(PE) : \mathbb{R} \rightarrow \mathbb{R}^M$, where $M$ is the size of the FFN input layer. Finally, we denote the window function by $\text{window} : \mathbb{R}  \rightarrow \mathbb{R}$.

\section{Method \label{sec:method}}
{\noindent{\bf Motivation }} A well-known drawback of self-attention is its relatively weak inductive bias. This is even more relevant when handling two-dimensional data. In order to design a data-efficient ViT, we choose not to incorporate 2-D inductive bias into the self-attention mechanism in ViT (as done in \cite{liu2021swin,xu2021vitae}), and instead employ an alternative sequence layer within the ViT engine. Recently, several novel sequence layers showed impressive results in 1-D sequence modeling, specifically in improving complexity~\cite{peng2023rwkv,poli2023hyena,dao2022hungry}. This motivated us to explore the utility of such layers as a drop-in replacement for ViT. 

Among those layers, we focus on the Hyena~\cite{poli2023hyena} layer for two main reasons: (i) It is built on simple mechanisms, such as multiplicative element-wise gating and implicit global filters. Hence, it provides a flexible structure that can be modified to incorporate image-specific inductive bias. (ii) Given that traditional convolution layers are known for their significant inductive bias in vision tasks, it is reasonable to assume that the implicit global convolution layers that are part of Hyena would possess similar capabilities. In this light, our work can be considered as a further step towards combining convolution layers and ViT. In contrast to previous work \cite{dai2021coatnet,guo2022cmt,d2021convit}, we employ implicit global convolution layers, rather than standard convolution layers, which do not focus exclusively on short-range dependencies. Furthermore, since the Hyena layer has a sub-quadratic dependency on sequence length, it can lead to a significantly more efficient ViT. This is especially valuable for tasks that involve processing high-resolution images, such as medical imaging.

\medskip
\noindent{\bf The Hyena-ND Layer\quad}
The Hyena layer is composed of three main components: (i) implicit global filters (ii) a data control mechanism in the form of gating (element-wise multiplications), and (iii) a short filter implemented by a 1-D convolutional layer. The scalability of the last two components to accommodate multidimensional data is straightforward, as the second component is dimension-agnostic, and the extension of (iii) to handle multidimensional data can be realized simply, via a standard 2-D convolutional layer. Therefore, our main contribution in introducing the Hyena N-D layer is the creation of N-D implicit filters, which can be utilized for N-D convolution. In the following two sections, we list two alternative strategies for the construction of these filters, and illustrate them in Fig. \ref{fig:mainFigure}. For simplicity, although the Hyena N-D formulation can naturally correspond to $N$ dimensions, we assume $N=2$.

\begin{figure*}[t]
    \centering
    \includegraphics[width=17cm,height=7cm]{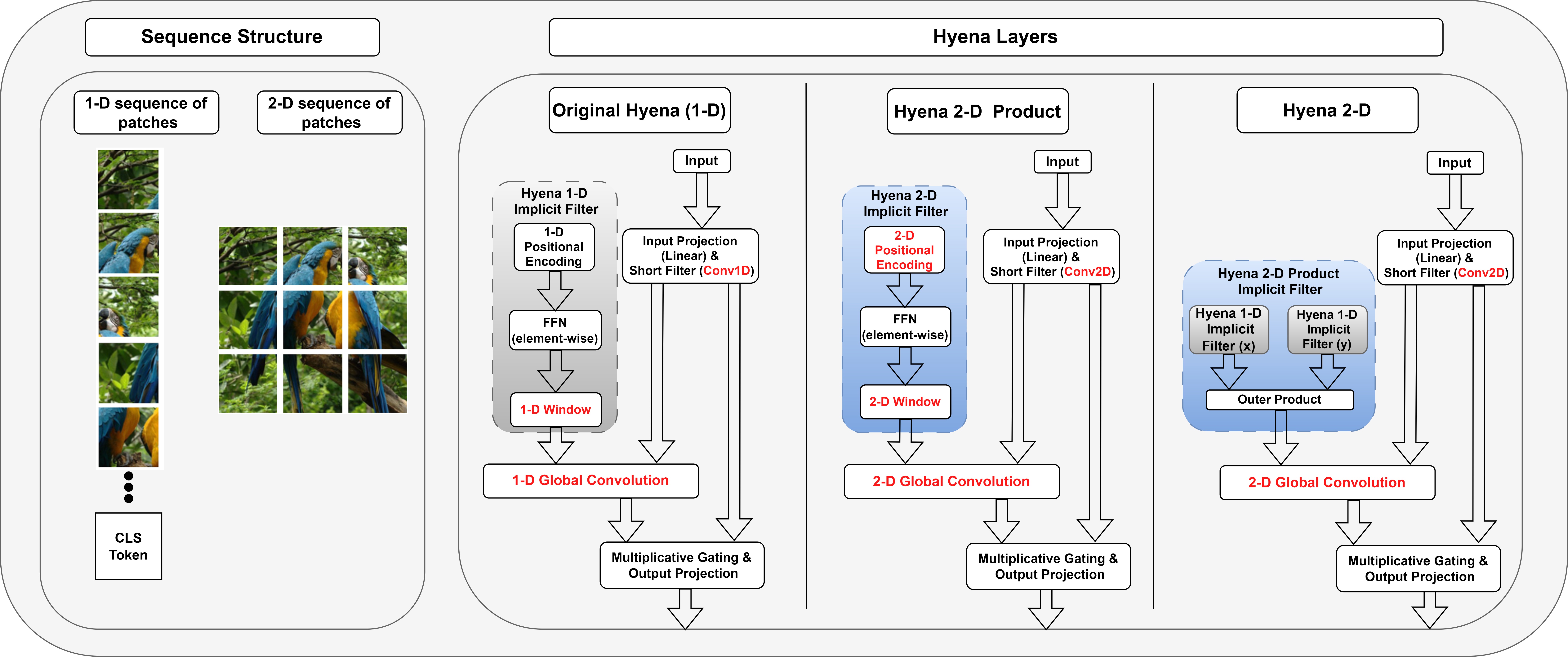}
    \caption{Method: (Left) An image can be organized as a 1-dimensional sequence of patches with a classification token, or as a 2-dimensional sequence of patches. The original attention-based ViT processed the patches as a 1-D sequence; Hyena-based ViT operates directly on a 2-D sequence of patches. (Right) The Hyena 2-D layer architecture is compared to the original Hyena. The modifications are in the parametrization of implicit filters, the type of global convolution, and the short filter type. Hyena 2-D$_{\text{product}}$ uses the implicit filters of Hyena 1-D as a black-box while Hyena 2-D extends the 1-D layer.}\label{fig:mainFigure}
\end{figure*}

\subsubsection{Using N-Dimensional Hyena as a Composition}

The most straightforward way employs multiple independent 1-D filters, similarly to S4ND~\cite{nguyen2022s4nd}. 
To obtain an N-D filter for each channel, a 1-D filter $H^n := (h_{1}^n , h_{2}^n, \hdots, h_{L_n}^n) $ of length $L_n$ is independently learned for each axis $n \in [N]$. The N 1-D filters are then combined to form a single global N-D filter via an outer product operation per channel: 
$$ H = {H^1} \otimes {H^2} \otimes \hdots {H^N}  $$

a notable drawback of this method is that parameterizing each dimension independently is unnatural for several modalities, for example images, and can result in poor inductive bias. We denote the layer as Hyena N-D$_{\text{ product}}$.

\subsubsection{Using Implicit N-Dimensional Hyena Filter}
In contrast to the previous approach for generalizing Hyena to N-dimensional data, this approach attempts to keep the spirit of the original Hyena in the design of N-D implicit filters rather than build a N-dimensional layer on top of the 1-D Hyena layer. To do so, N-D implicit filters and N-D windows are defined.

{\noindent{\bf N-D Implicit Filter }}
The implicit filters of the conventional (i.e., 1-D) Hyena are defined by:
\begin{equation}
    \label{eq:hyenaFilter}
    h_t = \text{window}(t) \cdot \text{FFN}(\text{PE}(t)) 
\end{equation}
where the window function is described in Eq. \ref{eq:hyena1dwindow} and the FFN denotes a simple feed-forward network.
A simple extension of Eq.~\ref{eq:hyenaFilter} into multidimensional filters with $N$ dimensions $n_1, n_2, \cdots , n_N$,  can be described by:
\begin{equation}
   H_{i_1, i_2, \hdots, i_N} = \text{window}(i_1, i_2, \hdots, i_N) \text{FFN}(\text{PE}(i_1, i_2, \hdots, i_N)) 
\end{equation}

\noindent{\bf N-D Window\label{par:Hyena2dWindow} } The Hyena 1-D window is defined by 
\begin{equation}
    \text{window}(t) = \text{exp}(- \alpha t) + \gamma 
\end{equation}
where $t$ is the time-stamp, $\alpha$ is a decaying parameter and $\gamma$ is a bias term. The following two window functions are considered for the 2-D case:
\begin{equation}
    \label{eq:synWindow}
    \text{window}_{\text{symmetric}}(i,j) = \text{exp}(- \alpha (i+j)) + \gamma 
\end{equation}
\begin{equation}
    \label{eq:dimWindow}
    \text{window}_{\text{dimensional}}(i,j) = \text{exp}(- \alpha i+ \beta j) + \gamma 
\end{equation}

In Hyena 1-D, the parameter $\alpha$ changes to regulate effective filter lengths across separate channels. This is accomplished by generating a sequence of evenly spaced non-learnable $\alpha$ values. For the 2-D case, we modified $\alpha$ and $\beta$ across separate channels and shuffled them randomly to obtain a diverse set of windows. We analyzed the two window functions, as well as the decision to make $\alpha, \beta$ and $\gamma$ as constants in Tab \ref{tab:windowVariants}. We also ablate the empirical contribution of the window mechanism by omitting it entirely.


\section{Model Extension}
\paragraph{Multi Directional Layer} 
Since the Hyena layer is causal, previous data elements will not be affected by subsequent ones. While this property is essential for language modeling, it is very unnatural for computer vision tasks, since it limits the model's capability and contradicts the ViT principles. To address this limitation, we introduce a multi-directional extension to our layer. The following two versions are explored: (a) a 4-directional version, in which before each layer, the input is projected into 4 separate representations, then each representation is rotated. For any representation, the Hyena layer is applied, and then a channel-wise linear layer aggregates the 4 signals. (b) a 2-directional version, in which a rotation is applied between the Hyena recurrence steps, or between the Hyena layers for order 2. These strategies are compared empirically in Tab \ref{tab:bidirectAblate}.

\paragraph{Combining with Attention \label{paragraph:HybridMethod}}
Although the empirical analysis in Sec. \ref{sec:results} demonstrates that when dealing with smaller datasets, Hyena 2-D surpasses attention as the core layer of ViT, it is unclear whether attention can be used to boost the model's performance further. Perhaps, Hyena 2D provides distinct benefits that are different from those of the attention model and the two can be fused synergistically to create a better hybrid model. 

To delve deeper into this aspect, we suggest two main strategies for integrating those layers: (i) \textbf{Alternate}: In this approach, for each pair of self-attention layers in the ViT backbone, we replace the first layer with Hyena 2-D. This approach can be interpreted as using Hyena 2-D to add inductive bias to the attention mechanism, similarly to~\cite{ma2022mega,baron20232}. (ii) \textbf{Hyena First:} Employing Hyena 2-D for the first half of the layers, and attention for the rest. The motivation for using Hyena first is that image-specific inductive bias is more important at the lower layers of the architecture, while the top layers integrate information from across the image.

Sec. \ref{subsec:classification} analyzes these methods empirically. To further understand the potential of the Hyena-First approach, we tried a similar version that employs self-attention for the first layers, denoted as \textbf{Attention First}. We found that the Hyena First approach outperformed the others. We also observed that the Alternate approach is superior to attention-free models, which demonstrates that the layers are complementary. This observation could potentially boost performance even in larger models or larger datasets.


\section{Model Analysis}

\subsection{Complexity \label{sec:complexity}}
The Hyena forward path consists of three steps: (i) constructing implicit filters, (ii) Applying N-D convolution, and (iii) computing the input and output projections, where the complexity of the last step is minimal.

Under the assumption that the hidden FFN dimension is smaller than the number of channels ($M \leq C$), the time and space complexity of creating implicit filters in Hyena 1-D, Hyena N-D and Hyena N-D$_{\text{product}}$ are $LCM$. For all Hyena variants, the computation of the kernel does not depend on the batch size $B$, making it more efficient for large batches.

Next, we apply an N-dimensional convolution between the kernel and the input. Since the convolution can be efficiently computed with FFT, the total time complexity
is $O(BCL \log(L)$ for any dimension N, and the total space complexity is $O(BLC)$. As can be seen, the convolution complexity dominates the overall complexity for large batches. An empirical analysis of this advantage in linear space-complexity is given in \ref{subsec:resEfficientArch}.

\subsection{Expressiveness and inductive bias \label{subsec:express}}
We next characterize the expressiveness of the Hyena N-D layer variants, starting by introducing a theoretical analysis of the expressiveness of the Hyena N-D layers and then comparing it to other methods for creating implicit N-D filters.

\noindent{\bf Assumptions\enspace} In this section, every theorem assumes that the FFN network uses sign activations and $M > 1$. Furthermore, for simplicity, both the positional encoding and window functions are considered to be identity functions.

\noindent{\bf Tensor rank as a criterion for expressiveness\enspace}
We start by introducing our criteria for measuring the expressiveness of the Hyena N-D layer. Inspired by~\citet{cohen2016expressive}, which employs tensor rank as a criteria for expressiveness, we apply tensor rank for the N-D kernels constructed in the Hyena N-D layer, and prove the following theorems: 

\begin{theorem}
\label{theorem:kernelsRank1}
 A single channel of the Hyena N-D$_{\textbf{product}}$ implicit filter can only express kernels of rank 1. 
\end{theorem}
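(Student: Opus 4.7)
The proof plan is essentially a direct unwinding of definitions, since the construction of the Hyena N-D$_{\text{product}}$ filter is literally given as an outer product. Concretely, recall that for each channel $c$, the construction produces
\begin{equation}
H \;=\; H^{1} \otimes H^{2} \otimes \cdots \otimes H^{N},
\end{equation}
where each $H^{n} \in \mathbb{R}^{L_n}$ is an independently parametrized 1-D Hyena filter. My first step would be to recall, for a self-contained proof, the standard definition of tensor rank: the rank of a tensor $T$ is the least integer $r$ such that $T$ can be written as a sum of $r$ elementary tensors, where an elementary (rank-1) tensor is by definition an outer product $v^1 \otimes \cdots \otimes v^N$ of vectors.

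The second step is then a one-line observation: $H$ is itself already in the form of a single elementary tensor, hence by the definition above its rank is at most $1$. Therefore the image of the Hyena N-D$_{\text{product}}$ parametrization is contained in the set of rank-$\le 1$ tensors, so the only nontrivial kernels that can arise are rank-$1$ kernels, which is precisely the claim of the theorem.

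There is no real obstacle in this direction; the statement is a structural consequence of the construction. The only subtleties worth flagging in the write-up are (i) the degenerate case in which some $H^{n}$ is identically zero, giving $H=0$ (tensor rank $0$), which should be acknowledged as a trivial exception absorbed into the phrase ``can only express kernels of rank $1$''; and (ii) the fact that the converse inclusion, namely that every rank-$1$ filter of a given shape is actually realizable, would rely on the universal approximation capability of each per-axis FFN under the stated assumption (sign activations, $M>1$), but this converse is not required by the theorem as stated. Accordingly, my planned proof would consist of one display setting up $H = H^1 \otimes \cdots \otimes H^N$, one sentence invoking the definition of rank-1 tensors, and one concluding line identifying the image of the parametrization with a subset of the rank-$\le 1$ tensors.
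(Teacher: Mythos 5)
Your proposal is correct and takes essentially the same route as the paper: the Hyena N-D$_{\text{product}}$ filter is by construction a single outer product $H^1 \otimes \cdots \otimes H^N$ of vectors, hence an elementary tensor of rank at most $1$. Your extra remark about the degenerate all-zero case is a minor refinement the paper omits, but the argument is otherwise identical.
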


\begin{theorem}
\label{theorem:kernelsRankD}
  Given a N-dimensional sequence such that $\forall n \in [N] : L_n = r$, a single channel of the Hyena N-D implicit filter with hidden dimension $F \geq 2Nr$ and at least 2 hidden layers with sign activations can express N-D kernels of tensor rank $r'$ for any $r' \in [2, \hdots ,r]$.
\end{theorem}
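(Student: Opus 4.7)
The plan is to prove Theorem~\ref{theorem:kernelsRankD} by explicit construction: for every target rank $r' \in \{2, \ldots, r\}$, I will exhibit a specific kernel of tensor rank exactly $r'$ that a single Hyena N-D channel realizes within the stated budget of at least $2Nr$ neurons in the first hidden layer and two sign-activated hidden layers. Because the assumptions fix the window and positional encoding to be identity, the implicit filter simplifies to $H_{i_1, \ldots, i_N} = \mathrm{FFN}(i_1, \ldots, i_N)$, so the problem reduces to a combinatorial question: which scalar functions of $N$ integer inputs can such a network compute? The target I would use is the super-diagonal tensor $T = \sum_{k=1}^{r'} e_k^{\otimes N}$, where $e_k$ is the $k$-th standard basis vector of $\mathbb{R}^r$.

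The construction proceeds through two gadgets. First I would build an indicator gadget realizing $\mathbb{1}[i_n = k]$ by two sign units via the identity
\[
\mathbb{1}[i_n = k] \;=\; \tfrac{1}{2}\bigl(\mathrm{sign}(i_n - k + \tfrac12) - \mathrm{sign}(i_n - k - \tfrac12)\bigr),
\]
and instantiate one copy for every pair $(n,k)$ with $n \in [N]$, $k \in [r]$, consuming exactly $2Nr$ neurons in the first hidden layer, which fits the width budget $F \geq 2Nr$. A conjunction gadget in the second hidden layer then computes $\prod_{n=1}^N \mathbb{1}[i_n = k]$ with a single sign unit via $\tfrac12(1 + \mathrm{sign}(\sum_n \mathbb{1}[i_n = k] - N + \tfrac12))$, one per $k \in [r']$. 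A final linear output layer sums these $r'$ conjunction outputs with unit weights and an appropriate bias, yielding exactly the target $T$ pointwise on $\{1,\ldots,r\}^N$.

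It remains to verify that $\mathrm{rank}(T) = r'$. The upper bound is immediate from the exhibited decomposition into $r'$ rank-one terms. For the lower bound I would unfold $T$ along the first mode into an $r \times r^{N-1}$ matrix whose only nonzero entries sit at positions $(k,(k,\ldots,k))$ for $k \in [r']$, giving matrix rank exactly $r'$; since the rank of any matricization is a well-known lower bound on tensor rank (the same notion exploited by \citet{cohen2016expressive}), this forces $\mathrm{rank}(T) \geq r'$. Ranging $r'$ over $\{2, \ldots, r\}$ then yields the full statement. The main obstacle is this last step: the upper bound and the gadget constructions are routine once one knows the sign-activation tricks, but pinning down a tensor rank from below generally requires a carefully chosen target whose matricization is transparent enough to analyze exactly, which is precisely why the super-diagonal choice is preferable to a generic rank-$r'$ tensor.
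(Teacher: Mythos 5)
Your construction is essentially the same as the paper's: the target tensor $\sum_{k=1}^{r'} e_k^{\otimes N}$ is exactly the truncated identity tensor the paper builds, your per-coordinate two-sign indicator gadgets occupying $2Nr$ first-layer units and your single-unit conjunction gadgets in the second layer mirror the paper's $\mathbf{W}_1,\mathbf{W}_2$ substitutions, and the final summation over the first $r'$ units matches the paper's modification of $\mathbf{W}^3$. The one place you go beyond the paper is the explicit mode-one matricization argument pinning the tensor rank at exactly $r'$, which the paper asserts without proof; that addition is correct and welcome.
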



These results are based on the unique structure of Hyena filters, which are obtained by employing a learnable function over positional encoding. Hence, we can represent an N-D filter with $N$ dimensions of size $L_n$ per dimension with an equivalent N-dimensional tensor $\mathbb{A}$ such that:
\begin{equation}
\mathbb{A}_{i_1, i_2, \hdots i_N} := \text{MLP} ( \text{PE} (i_1, i_2, \hdots i_N))\,,
\end{equation}
where $ \forall j \in [N] :i_j \in [L_n] $.

Equipped with this formulation, the proof of Theorem \ref{theorem:kernelsRankD} is specified in Appendix \ref{app:proofKernelsFullRank}; the proof of \ref{theorem:kernelsRank1} is trivial, and derives from the fact that to compute a global multi-axis kernel $\mathbb{H}$, Hyena N-D$_{\textbf{product}}$ takes the outer product operation on the per-axis kernels $\mathbb{H}_n \in \mathbb{H}^{L_n\times 1}$ for all $n \in [N] $. Since each kernel is a vector, it is clear that: %
\begin{align}
    \textbf{rank}(\mathbb{H}) = \textbf{rank}(\mathbb{H}_1 \otimes \mathbb{H}_2 \otimes \hdots \otimes \mathbb{H}_D) = 1
\end{align}

\noindent{\bf Inductive bias towards low rank\enspace}
Theorem \ref{theorem:kernelsRankD} was originally designed to evaluate the expressiveness of the Hyena N-D layer. Nevertheless, it offers valuable insights into the implicit regularization and the inductive bias of the implicit filter mechanism. Theorem~\ref{theorem:kernelsRankD} introduces a linear parameter scaling type of regularization and it is evident that when the hidden dimension of the FFN layer increases, the potential rank increases as well, and the filters are biased toward low-rank tensors. 

Since regularization is seen as a crucial attribute for the effectiveness of global convolution layers~\cite{regu1,regu2}, it is imperative to rigorously define the type of regularization present in the Hyena filters. To the best of our knowledge, this is the first time the inductive bias of the Hyena layer has been formalized. 

{\noindent{\bf Comparison of complexity and expressiveness with other layers }} 
Tab.~\ref{tab:Comparison} compares the expressiveness and complexities of implicit N-dimensional convolution layers. The baseline layers are S4ND~\cite{nguyen2022s4nd} and 2D-SSM~\cite{baron20232}. As can be seen, Hyena N-D is the first layer that can express full-rank kernels for any dimension. Moreover, it has the same complexity as the Hyena 1-D layer when given sequences with an equal number of elements for any number of dimensions.
\begin{table}[]
\centering
\small
\begin{tabular}{@{}l@{~}c@{~}cc@{~}c@{}}
\toprule
\hfill Criteria: & \multicolumn{2}{c}{{Complexity}} & \multicolumn{2}{c}{Express} \\
\cmidrule(r){1-1}
\cmidrule(lr){2-3}
\cmidrule(lr){4-5}
Layer & Time & Space & 2-D & N-D \\
\midrule
\midrule
Hyena 1-D  & $LMC$ & $LMC$ & - & -   \\
\midrule
\multicolumn{5}{c}{Multi-dimensional layers}\\
\midrule
S4ND & $ C  (\Tilde{L}+\Tilde{N'})$ & $C (L + N')$ & 1 & 1    \\
2-D SSM & $L L_n C N$ & $L L_n C N$ & 2 & N.a   \\
\midrule
Hy. N-D$_\text{prod}$ & $LMC$ & $LMC$ & 1 & 1    \\
Hy. N-D  & $LMC$ & $LMC$ & 2 & N  \\
\bottomrule
\end{tabular}
\caption{Complexity and expressiveness (Express) for N-dimensional implicit filters with equal length $L_n$ per dimension. Tildes denote log factors. We compare our Hyena N-D variants (Hy.) from Sec. \ref{sec:method}, and the two baselines (i) S4ND \cite{nguyen2022s4nd} and (ii) 2-D SSM \cite{baron20232}. For the baselines, state size is denoted by $N'$. Expressiveness is analyzed using the tensor rank of the kernel. For simplicity, we assume that $C \geq M$.}\label{tab:Comparison}

\end{table}

\section{Experiments\label{sec:results}}
We evaluated our method on image classification benchmarks across several ViT backbones, including ViT, Swin, and DeiT in Sec. \ref{subsec:classification}, followed by empirically justifying the choices made in the Hyena N-D layer design in Sec. \ref{subsec:designChoices}. Finally, in Sec. \ref{subsec:resEfficientArch} we empirically analyzed the memory efficiency of our layer against standard ViT. 

{\noindent{\bf Experimental setup }} All experiments are conducted using PyTorch. The results of all experiments were averaged over 3 seeds, and we set the FFN dimension at 32 for all datasets. As a deliberate decision, we do not perform hyperparameter tuning of the backbone and training procedure, apart from stochastic depth. All hyperparameters are copied from the baseline, which is~\cite{lee2021vision} for ViT and Swin, and the DeiT repository~\cite{touvron2021training} for experiments on CelebA. Naturally, these parameters were optimized for the vanilla (attention-based) transformers.

\subsection{Hyena N-D as the core layer of ViT \label{subsec:classification}}
We evaluate our method on CIFAR-100, Tiny-ImageNet and CelebA, three classification benchmarks with different scales. We report results both for architectures in which the self-attention mechanism is replaced with N-D hyena and for the hybrid methods of Sec.~\ref{paragraph:HybridMethod}. As we show below, there is a clear advantage to the Hyena-first hybrid method over the other variants, which can be considered as ablations.

{\noindent{\bf Baselines }} We compare our models with vanilla attention and two improved versions of attention for the ViT, Swin and DeiT backbone: (i) SL-transformers \citep{lee2021vision}, which constitute a data-efficient version of ViT for handling small datasets, and (ii) 2-D SSM \citep{baron20232} that incorporates inductive bias into the attention mechanism using a layer that is built on top of a two-dimensional state-space model. Both layers are specifically designed to improve the inductive bias of the self-attention layer within the ViT backbone.

{\noindent{\bf ViT experiments}} For the ViT backbone, we first remove the class token, since Hyena N-D operates on an ordered 2-D sequence. Then we replace each attention layer with Hyena, Hyena 2-D, or Hyena 2-D$_{\text{product}}$. As can be seen in the upper part of Tab. \ref{tab:Datasets}, employing Hyena 1-D instead of attention improves the results by 1.44\% on CIFAR-100 and 2.61\% on Tiny-Imagenet. The empirical contribution of using Hyena 2-D instead of Hyena on those two datasets is 0.45\% and 0.32\% respectively. The hybrid models also seem effective. The Hyena-2D First approach consistently surpasses the other approaches, performing on average 1.2\% higher than the Alternate hybrid approach, 3.46 \% higher than the attention first hybrid approach, and 4.23 \% higher than the standard attention model. 

Compared to the recent baselines 2-D SSM and SL-ViT, we found empirically that the Hyena 2-D based ViT is superior to those two variants by a significant margin. For instance, on the ViT backbone, the Hyena 2-D based ViT performs, on average, 3.83\% higher than attention with 2D-SSM and 0.385\% higher than SL-ViT. The results of the hybrid model are even better, but we did not test hybrid models for these variants. 

{\noindent{\bf Swin experiments}} The Swin backbone improves the ViT architecture by adopting two principles: (i) using a hierarchical structure of decreasing size patches across layers, which is implemented in the backbone level, and (ii) using shifted windows for better capture of spatial dependencies, which is implemented efficiently in the layer-level via a modified attention mask. As we replace each attention layer with several Hyena variants that do not support mask handling, we omit the second principle. 

The empirical results, presented in Tab \ref{tab:Datasets}(bottom) show that Hyena-based ViT is favorable to the attention-based models, even without leveraging this shifting strategy. In CIFAR-100, using Hyena 1-D instead of attention improves results by 1.26\%, and in Tiny-Imagenet, by 1.34\%. Using Hyena 2-Dinstead of Hyena 1-D boosts results further, to 1.96\% and 2.03\%, respectively. We observed that the Hyena-based model notably surpasses the baselines. For instance, the performance advantage is 1.645\% over attention with 2D-SSM and 1.17\% above SL-ViT. 

Similarly to ViT, in Swin, the Hyena-2D First hybrid model approach consistently surpasses the other approaches, performing on average 2.42\% higher than the Alternate hybrid approach, 6.18 \% higher than the Attention first hybrid approach, and 5.45 \% higher than the standard Swin model.

\begin{table}[ht]
\centering
\small
\begin{tabular}{@{}l@{~}c@{~}cc@{~}c@{}}
\toprule
\hfill Dataset: & \multicolumn{2}{c}{{CIFAR-100}} & \multicolumn{2}{c}{Tiny-Imagenet} \\
\cmidrule(r){1-1}
\cmidrule(lr){2-3}
\cmidrule(lr){4-5}
Layer & Acc. & \# Params(M) & Acc. & \# Params(M)\\

\midrule
\multicolumn{5}{c}{\fbox{ViT variants:}}\\
ViT   &  72.72 & 2.71 & 55.14 & 2.75    \\
ViT (no CLS)  &  75.27 & 2.71 & 59.34 & 2.75  \\
ViT w. 2-D SSM & 74.07 & 2.72 & 57.66 & 2.75    \\
SL-ViT  &  76.92 &  2.90 & 61.07 &  2.92    \\
 \midrule
Hyena 1-D  &  76.71 & 2.72 & 61.95 & 2.74 \\
Hyena 2-D$_{\text{product}}$   &  \textbf{77.16} & 2.79 & 62.23 & 2.74    \\
Hyena 2-D  &  76.82 & 2.73 & \textbf{62.27} & 2.74    \\
 \midrule
Hybrid Hyena 2-D$_{\text{First}}$ &  \textbf{78.42} & 2.72 & \textbf{64.66} & 2.74   \\
Hybrid Attention$_{\text{First}}$ &  74.97 & 2.72 & 61.2 & 2.74    \\
Hybrid Alternate  &  77.35 & 2.72 & 63.32 & 2.74    \\

\midrule
\midrule
\multicolumn{5}{c}{\fbox{Swin variants:}}\\
Swin   &  77.60 & 7.11 & 60.06 & 7.15    \\
SL-Swin   &  79.99 & 10.2 & 64.95 & 10.4    \\
Swin w/ 2-D SSM  & 80.12 & 7.15 & 65.77 & 7.18    \\
\midrule
Hyena 1-D  &  78.86 & 7.23 & 61.81 & 7.29    \\
Hyena 2-D$_{\text{product}}$   & 80.82 & 7.51 & 65.43 & 7.64    \\
Hyena 2-D &  \textbf{81.31} & 7.28 & \textbf{66.92} & 7.31    \\
\midrule
Hybrid Hyena 2-D$_{\text{First}}$ &  \textbf{81.50} & 7.19 & \textbf{67.06} &  7.23    \\
Hybrid Attention$_{\text{First}}$ &  76.49& 7.19 & 59.70 &  7.23    \\
Hybrid Alternate  &  79.8 & 7.19 & 63.92 & 7.23    \\
\bottomrule
\end{tabular}
\caption{Variants of ViT (top half) and Swin (bottom half) for small datasets}\label{tab:Datasets}
\end{table}

{\noindent{\bf DeiT experiments}}
Similarly to ViT, we first remove the CLS token and measure performance for each layer. We conduct the experiments on the large-scale CelebA dataset. The original image size is 178x218, and it is resized to 224x224 to match the standard DeiT patch size. The dataset includes 40-way multi-label attribute classification. We report the average accuracy for all 40 tasks, training the models for 20 epochs, similarly to the procedure of~\cite{nguyen2022s4nd,baron20232} on this dataset. 

As can be seen in Tab. \ref{tab:DEITcelebA}, contrary to the finding in Tab. \ref{tab:Datasets}, removing the classification token and replacing attention with Hyena 1-D impacts the results negatively. However, when we integrated Hyena 2-D, the results improved by ~6\% over the Hyena 1-D baseline. Incorporating the bi-directional Hyena 2-D variant boosted results by 1.35\%, matching the attention-based model (without a classification token). However, the original DeiT (attention with classification token) is still more accurate.

As before, the Hyena-2D First approach outdoes the other approaches, performing 0.23\% higher than the Alternate hybrid approach, 2.12 \% higher than the Attention first hybrid approach, and 0.66 \% higher than the standard DeiT. It also outperforms by 0.55\% the 2-D SSM-base baseline, which is slightly better than DeiT itself.

\begin{table}[t]
\begin{center}
\begin{tabular}{lcc}
 \toprule
Layer & Acc. & \# Params \\ 
\midrule
DeiT   &  89.73 & 5.532   \\
DeiT (w/o CLS token) &  88.48 & 5.531  \\
DeiT w/ 2-D SSM  &  89.84 & 5.541  \\
\midrule
DeiT w/ Hyena 1-D  &  80.93 & 5.81  \\
DeiT w/ Hyena 2-D$_{\text{product}}$   &  88.16 & 5.84  \\
DeiT w/ Hyena 2-D   &  88.68 & 5.66  \\
\midrule
Hybrid DeiT Hyena 2-D$_{\text{First}}$ &  \textbf{90.39} & 5.61   \\
Hybrid DeiT Attention$_{\text{First}}$ &  88.27 & 5.61  \\
Hybrid DeiT Alternate  &  90.16 & 5.61   \\
\bottomrule
\end{tabular}
\caption{Variants of DeiT for the Celeb-A dataset.}\label{tab:DEITcelebA}
\end{center}
\begin{center}
\small
\begin{tabular}{lccc}
 \toprule
Layer & 1-Dir & 2-Dir &  4-Dir \\ 
\midrule
 Hyena 2-D$_{\text{product}}$  & $86.76$ & $\textbf{88.16}$  & $88.09$  \\
Hyena 2-D & $87.33$ & $\textbf{88.68}$  & $88.31$  \\
\bottomrule
\end{tabular}
\end{center}
\caption{Ablation results for multi-directional methods, on Celeb-A and the DeiT-S backbone.}\label{tab:bidirectAblate}
\small
\begin{center}
\begin{tabular}{lcccc}
 \toprule
Layer & Constant & Learnable &  Symm.&  w/o \\ 
\midrule
 Hyena 2-D$_{\text{product}}$  & $\textbf{86.76}$ & $85.83$  & N/A & $86.54$  \\
Hyena 2-D & $\textbf{87.33}$ & $85.97$  & $86.29$ & $86.79$ \\
\bottomrule
\end{tabular}
\end{center}

\caption{Ablation results for window methods (tested on 1-Dir), on Celeb-A and the DeiT-S backbone.}\label{tab:windowVariants}

\end{table}

\subsection{Model variants\label{subsec:designChoices}}

In this section we justify our design choices.

{\noindent{\bf Multi-directional }} 
Tab. \ref{tab:bidirectAblate} explores two approaches for efficiently modifying the Hyena layer to consider multi-directional data. As expected, for both Hyena N-D and Hyena N-D$_\text{product}$, the multi-directional approach improves the results by $\sim$1-1.5 \%. We observe that the 2-D approach, which rotates the input before each step on the Hyena recurrence performed slightly better than the 4-directional approach. It is also important to note that the 2-directional version has negligible additional computation than the 1-directional variant.

{\noindent{\bf Window function }}
 As detailed in Sec. \ref{par:Hyena2dWindow}, we explore several window functions, which are evaluated in Tab. \ref{tab:windowVariants}. First, we compare the symmetric (Eq.~\ref{eq:synWindow}) and the dimensional (Eq.~\ref{eq:dimWindow}) windows functions within the Hyena 2-D layer. We found that the  dimensional function performs 1.04 \% better, hence we choose it as our standard window function. Next, we ablate the window mechanism by omitting it and observe a degradation in accuracy of 0.54 \% for Hyena 2-D and 0.26 \% for Hyena 2-D$_{\text{product}}$. Finally, we try to learn the window function by parameterizing Eq. \ref{eq:dimWindow} separately for each channel. This decreases the results by 1.36\% for Hyena 2-D and by 0.93\% for Hyena 2-D$_\text{product}$.

\subsection{Efficiency for large amounts of patches \label{subsec:resEfficientArch}}
One additional benefit of Hyena-based ViT compared to Attention-based ViT is its enhanced complexity in terms of time and memory, as detailed in Sec. \ref{sec:complexity}. To evaluate the memory efficiency of Hyena-ViT in comparison to the standard ViT, we conducted experiments using different patch sizes and measured the peak GPU memory consumption during the forward pass. Fig. \ref{fig:memoryConsumption} demonstrates the significantly improved memory consumption of Hyena-ViT.

\begin{figure}[t]
    \centering
    \includegraphics[width=6.5cm,height=5.3cm]{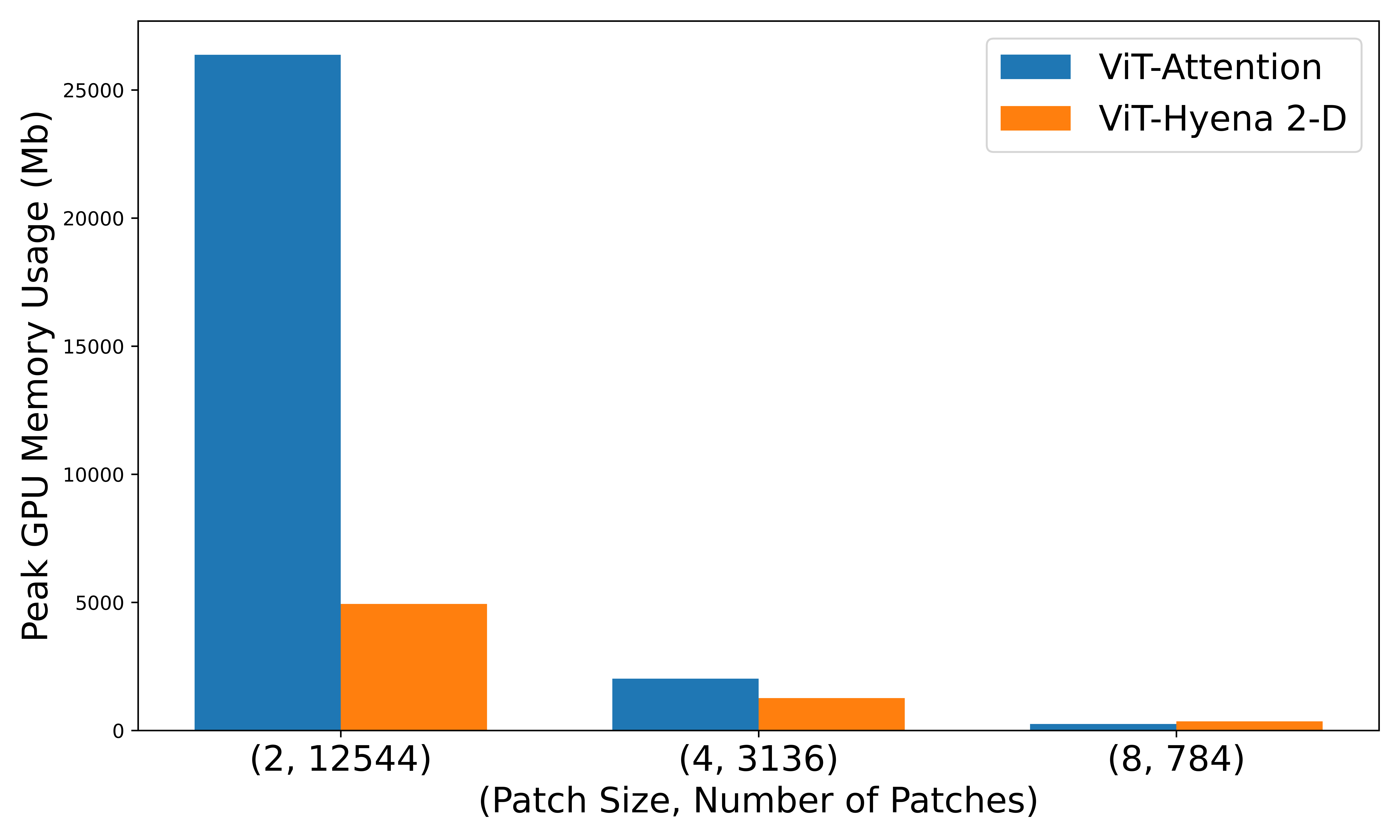}
    \caption{Peak Memory Consumption for Attention-Based and Hyena-Based ViT per patch size}\label{fig:memoryConsumption}
    \centering
\includegraphics[width=7.5cm,height=5.5cm]{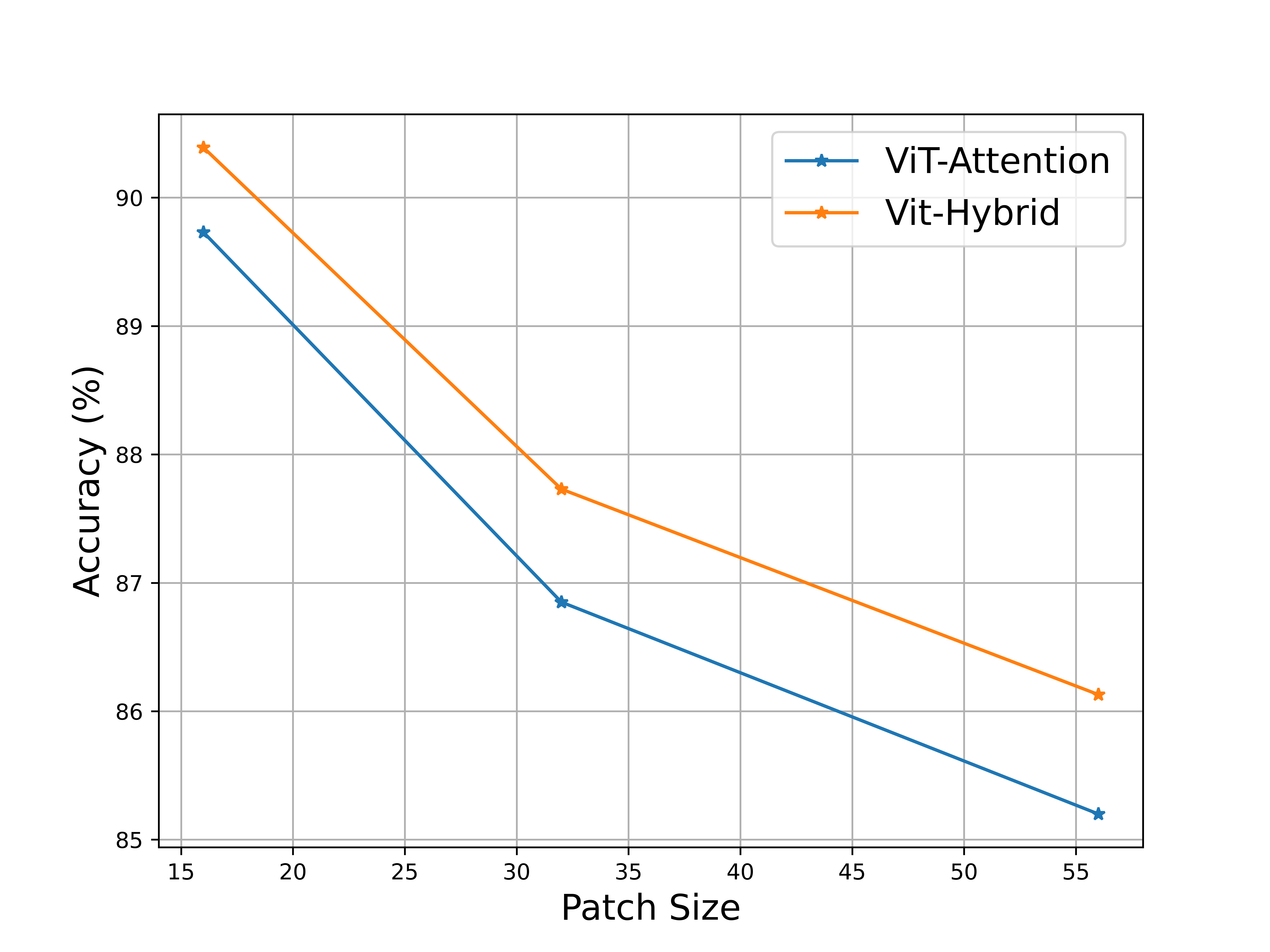}
    \caption{Impact of patch size on the accuracy of Attention-based ViT and Hyena Hybrid ViT. For both Attention and Hyena layers we use the best variants: a CLS token for the Attention layer and a 2-directional layer for the Hyena 2-D.}\label{fig:acc_pr_patchsize}

\end{figure} 

Employing a large number of patches can be critical in two main scenarios: (i) processing high-resolution images, and (ii) working with smaller patches. Previous studies have shown that overly large patches can negatively impact the accuracy of ViT, and smaller patches generally tend to provide better image-specific inductive bias. We examined how the patch size of Hyena-Hybrid ViT affects accuracy in Fig. \ref{fig:acc_pr_patchsize}. The results indicate that Hybrid-ViT also benefits from smaller patches, without a quadratic increase in memory consumption in half of the layers. Thus, Hyena-Hybrid ViT and Hyena-ViT present an opportunity to develop cost-effective ViT models with significantly smaller patches at the same cost.

\section{Limitations}
The move to N-D Hyena-based pooling instead of attention prevents us from using the CLS token, which could be useful. As future work, we would like to add such tokens not as a concatenation, but rather as a conditioning signal. Furthermore, as shown in Swin, self-attention can be easily modified with a domain-dependent mask that enforces a specific shape of inductive bias. Our N-D Hyena lacks such a mechanism. As future work, we would like to investigate whether the N-D window can be modified for similar purposes.

\section{Discussion and Future Work}
In this work, we extend the recent Hyena layer into multi-dimensional data and demonstrate that it can be leveraged to create a data- and memory-efficient variant of ViT. 
We show that a few design choices, such as (i) inserting inductive bias of 2-dimensional locality via employing 2-D instead of 1-D implicit filters, (ii) extending the layer to be a multi-directional operator, and (iii) merging attention and Hyena in a specific manner can notably improve the performance of ViT across various benchmarks.

For future research, we plan on exploring the empirical power of the Hyena 2-D layer in scenarios corresponding to its advantages. As one clear advantage of Hyena-based ViT over vanilla ViT is time- and memory complexity, employing Hyena-ViT in scenarios that require processing large amounts of patches, as well as real-time or low-budget restrictions, is very promising. Finally, we are interested in benchmarking Hyena-ViT on tasks beyond classification, such as segmentation and generation, as well as applying the layer directly to other N-dimensional modalities, such as speech and video.

\newpage
\bibliography{HyenaND}
\newpage
~
\newpage
\appendix
\section{Expressiveness\label{app:proofKernelsFullRank}}
\begin{theorem}
\label{theorem:kernelsRankDP}
  Given an N-dimensional sequence such that $\forall d \in [N] : L_n = r$, a single channel of the Hyena N-D implicit filter with hidden dimension $F \geq 2Nr$ and at least 1 hidden layers with sign activations can express N-D kernels of tensor rank $r'$ for any $r' \in [2, \hdots ,r]$
\end{theorem}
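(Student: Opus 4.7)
The plan is to exhibit, for each target rank $r' \in \{2,\ldots,r\}$, an explicit $N$-way tensor of tensor rank exactly $r'$ and then to realize it as the output of the Hyena N-D MLP on the grid $[r]^N$. Since the positional encoding and window are identity by assumption, the induced filter tensor $\mathbb{A}$ coincides with the MLP output, so it suffices to reason directly about the MLP.

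The target I would pick is the partial super-diagonal
\[
\mathbb{T}^{(r')}_{i_1,\ldots,i_N} \;=\; \sum_{k=1}^{r'} \mathbb{1}[i_1=k]\,\mathbb{1}[i_2=k]\cdots \mathbb{1}[i_N=k].
\]
The sum-of-outer-products form yields $\textbf{rank}\bigl(\mathbb{T}^{(r')}\bigr) \le r'$ immediately. For the matching lower bound I would invoke the mode-$1$ unfolding: the resulting $r \times r^{N-1}$ matrix has nonzero rows only for $k=1,\ldots,r'$, each being a distinct standard basis vector supported at the column indexed by the tuple $(k,k,\ldots,k)$. Those $r'$ rows are linearly independent, so the unfolding has matrix rank exactly $r'$, and because any matrix unfolding lower-bounds the tensor rank, $\textbf{rank}\bigl(\mathbb{T}^{(r')}\bigr) = r'$.

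To realize $\mathbb{T}^{(r')}$ inside the MLP, I would first allocate, for each $(n,k) \in [N]\times[r]$, a pair of sign neurons $\sigma^+_{n,k}=\text{sign}(i_n-k+\tfrac12)$ and $\sigma^-_{n,k}=\text{sign}(i_n-k-\tfrac12)$, saturating the width hypothesis at $2Nr$ neurons. A fixed linear map produces the $Nr$ coordinate indicators via $\mathbb{1}[i_n=k]=\tfrac12(\sigma^+_{n,k}-\sigma^-_{n,k})$. A subsequent sign layer then computes, for each $k\in[r']$, the conjunction $C_k=\text{sign}\!\bigl(\sum_{n=1}^N \mathbb{1}[i_n=k] - N + \tfrac12\bigr)$, which equals $+1$ precisely when $i_1=\cdots=i_N=k$ and $-1$ otherwise. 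The affine output $\sum_{k=1}^{r'}\tfrac12(C_k+1)$ matches $\mathbb{T}^{(r')}_{i_1,\ldots,i_N}$ pointwise on $[r]^N$, so the realized kernel has rank exactly $r'$.

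The hard part is not the realization but pinning down the rank \emph{equality}: producing a tensor of rank at most $r'$ is trivial from any sum of $r'$ rank-one tensors, whereas ruling out a strictly lower rank requires a target whose unfolding has full matrix rank $r'$, which is exactly what the partial super-diagonal achieves. A secondary bookkeeping subtlety concerns depth: the natural construction above uses a second sign layer for the conjunctions, matching the two-hidden-layer variant of the theorem in the main text; if the appendix's single-hidden-layer count is to be interpreted strictly, I would compress the scheme by using "diagonal-slab" halfspace cuts of the form $\text{sign}(\sum_n i_n - Nk \pm \tfrac12)$ combined with the per-coordinate slab indicators to reproduce the same rank-$r'$ pattern within one sign layer and the same $O(Nr)$ width.
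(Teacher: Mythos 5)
Your construction is essentially the paper's own: a first sign layer of width $2Nr$ whose paired half-space neurons encode the coordinate indicators, a second sign layer computing the per-$k$ conjunctions $i_1=\cdots=i_N=k$, and an output summing the first $r'$ of them, which yields exactly the truncated identity (partial super-diagonal) tensor that the paper obtains by zeroing out the last $r-r'$ output weights. The one substantive point where you go beyond the paper is the rank \emph{equality}: the appendix merely asserts that the truncated identity tensor has rank $r'$, whereas your mode-$1$ unfolding argument actually establishes the lower bound $\textbf{rank}\geq r'$, so your version closes a step the paper leaves implicit. Your remark on depth is also apt --- the appendix statement says ``at least 1 hidden layer'' while both your construction and the paper's use two sign layers (matching the main-text Theorem~\ref{theorem:kernelsRankD}); this is an inconsistency in the paper's statement rather than a gap in your argument, and your proposed single-layer compression need not be pursued.
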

\begin{proof} 
We prove the theorem by showing that a single channel of the Hyena N-D implicit filter can express the N-dimensional identity tensor~\cite{bi2022tensor} for any dimension $D>1$ in Lemma. \ref{lemma:lemmaHyenaID}. Thus, it is clear that a single channel of the Hyena N-D implicit filter can express full-rank kernels, and then we generalize the proof to kernels of any rank $r \in [D]$ 

We start by introducing the identity tensor:
\begin{definition} 
\textbf{Identity tensor~\cite{bi2022tensor}}: The elements of the N-dimensional identity tensor \( \mathbf{I} \) are given by
\[
I_{j_1, j_2 , \cdots, j_N} = 
\begin{cases} 
1 & \text{if } j_1 = j_2 = \cdots = j_N  \\
0 & \text{otherwise}
\end{cases}
\] 
\end{definition} 

\begin{lemma} [Hyena N-D as the identity tensor] \label{lemma:lemmaHyenaID}
The Hyena N-D implicit filter with hidden dimension $F \geq 2Nr$ and at least 2 hidden layers with sign activations can express the identity tensor of dimension $r$.
\label{lemma:idTensor}
\end{lemma}

{\noindent{\bf Generalization to any rank }}

Based on Lemma~\ref{lemma:lemmaHyenaID}, we can construct an FFN that embodies the identity tensor.

A unique characteristic of this FFN construction is the ability to adjust tensor rank through weight modifications in the final layer. Specifically, the weights $\mathbf{W}^3$ are defined as:
$$
{w^3}_{1,i} = 
\begin{cases} 
1 & \text{if } i \leq r'  \\
0 & \text{otherwise}
\end{cases}
$$

By this configuration, only the first r' neurons significantly influence the output, effectively transforming the tensor rank from r to r'

Following the weight adjustment specified above, the tensor can be truncated to its initial $r'$ elements across every dimension. These elements inherently define an identity tensor of rank $r'$ . Any elements beyond this truncated set are zeros. Given the properties of tensor rank, the introduction of these zero elements does not augment the tensor rank.

 \end{proof} 
\begin{proof}[Proof of 
Lemma~\ref{lemma:lemmaHyenaID}]

We prove the lemma using a general example. For simplicity, we assume that the positional encoding function is the identity function. Thus, we consider the following FFN network:

{\noindent{\bf FFN Definition }}
Let the input layer have \( N \) neurons, the first hidden layer \( 2Nr \) neurons, the second hidden layer \( r \) neurons, and the output layer 1 neuron.

Given an input vector \( \mathbf{x} \in [r]^N \) that represents the positional encoding:

The output of the hidden layers is:
\[ \mathbf{h}_1 = \text{sign}\left( \mathbf{W}_1 \mathbf{x} + \mathbf{b}_1 \right),\quad \mathbf{h}_2 = \text{sign}\left( \mathbf{W}_2 \mathbf{h}_1 + \mathbf{b}_2 \right) \]
where \( \mathbf{W}_2 \in \mathbb{R}^{h_2 \times h_1} \), \( \mathbf{b}_2 \in \mathbb{R}^{h_2} \),  \( \mathbf{W}_1 \in \mathbb{R}^{h_1 \times n} \) and \( \mathbf{b}_1 \in \mathbb{R}^{h_1} \).

and the output of the network is:
\[ \mathbf{y} = \text{sign}\left( \mathbf{W}_3 \mathbf{h}_2 + \mathbf{b}_3 \right) \]
where \( \mathbf{W}_3 \in \mathbb{R}^{1 \times h_2} \) and \( \mathbf{b}_3 \in \mathbb{R}^{1} \).

{\noindent{\bf FFN Substitution }}
We will substitute values in $\mathbf{W_1}, \mathbf{W_2}, \mathbf{W_3}$ and $\mathbf{b_1}, \mathbf{b_2}, \mathbf{b_3}$ such that the FFN implements the identity tensor. To achieve this, we use the first layer to obtain a one-hot representation per dimension, which the last two layers will convert into the desired function.

 Thus, we will substitute the values of the first hidden layer $\mathbf{W}_1$ which is denoted by ${w^1}_{i,j}$ and $\mathbf{b}_1$ as follows:
$$
{w^1}_{i,j} = 
\begin{cases} 
1 & \text{if } i \leq Nr \text{ and } \text{floor}(i / N) = j     \\
-1 & \text{if } i > Nr \text{ and }\text{floor}(i / N) = j+Nr \\
0 & \text{otherwise}
\end{cases}
$$

$$
{\mathbf{b}}_{i} = 
\begin{cases} 
i-\frac{1}{2} & \text{if } i \leq Nr\\
i+\frac{1}{2} & \text{if } i > Nr  \\
0 & \text{otherwise}
\end{cases}
$$
Given the output of the first layer $$h_1:= ( {h_1}^1, {h_1}^2, \cdots, {h_1}^{2Nr}) $$ 
it is easy to see that the pair of neurons ${h_1}^{Ni+j},{h_1}^{N(r+i)+j}$ are active if and only if $x_i = j$.

Similarly, we define the second layer as follows:
$$
{w^2}_{i,j} = 
\begin{cases} 
1 & \text{if } j \% r = i  \\
0 & \text{otherwise}
\end{cases}
$$

$$
{\mathbf{b}}_{2} = (-\delta , -\delta  , \cdots , -\delta ), \quad \delta = -2N + \frac{1}{2}
$$

Given the output of the second layer $$h_3:= ( {h_2}^1, {h_2}^2, \cdots, {h_2}^{r}) $$ 
it easy to see that $\forall i \in [N], j \in L_n :{h_2}^{j}=1$ if and only if $ \forall i \in [N] : x_i = j$.

Hence, by using the last layer as an "OR" gate, which can be achieved by setting the last layer as follows:

$$
{w^3}_{1,i} = 
\begin{cases} 
1 & \text{if } i \leq r  \\
0 & \text{otherwise}
\end{cases}
$$

$$
\mathbf{b}_{3} = - \frac{1}{2}
$$
It is clear that the FFN network implements the identity tensor. 


\end{proof}
\end{document}